\def\eqref#1{equation~\ref{#1}}
\def\1{\bm{1}}
\def\vx{{\bm{x}}}
\DeclareMathAlphabet{\mathsfit}{\encodingdefault}{\sfdefault}{m}{sl}
\SetMathAlphabet{\mathsfit}{bold}{\encodingdefault}{\sfdefault}{bx}{n}
\newtheorem{theorem}{Theorem}  
\newtheorem{corollary}{Corollary}
\title{HyperFace: Generating Synthetic Face Recognition Datasets by Exploring Face Embedding Hypersphere}
\author{
Hatef Otroshi Shahreza$^{1,2}$ and S\'{e}bastien Marcel$^{1,3}$\\ 
$^{1}$Idiap Research Institute, Martigny, Switzerland\\
$^{2}$\'{E}cole Polytechnique F\'{e}d\'{e}rale de Lausanne (EPFL), Lausanne, Switzerland\\
$^{3}$Universit\'{e} de Lausanne (UNIL), Lausanne, Switzerland \\
\texttt{\{hatef.otroshi,sebastien.marcel\}@idiap.ch} 
}
\begin{document}

\maketitle

\begin{abstract}

Face recognition datasets are often collected by crawling Internet and without individuals' consents, raising  ethical and privacy concerns. Generating synthetic datasets for training face recognition models has emerged as a promising alternative. However, the generation of synthetic datasets remains  challenging as it entails adequate inter-class and intra-class variations. While advances in generative models have made it easier to increase intra-class variations in face datasets (such as pose, illumination, etc.), generating sufficient inter-class variation is still a difficult task. In this paper, we formulate the dataset generation as a packing problem on the embedding space (represented on a hypersphere) of a face recognition model and propose a new synthetic dataset generation approach, called HyperFace. We formalize our packing problem as an optimization problem and solve it with a gradient descent-based approach. Then, we use a conditional face generator model to synthesize face images from the optimized embeddings. We use our generated datasets to train face recognition models and evaluate the trained models on several benchmarking real datasets. Our experimental results show that models trained with HyperFace achieve state-of-the-art performance in training face recognition using synthetic datasets. Project page: \href{https://www.idiap.ch/paper/hyperface}{\color{magenta}https://www.idiap.ch/paper/hyperface}
\end{abstract}

\section{Introduction}
Recent advances in the development of face recognition models are mainly
driven by the deep neural networks~\citep{he2016deep}, the angular loss functions \citep{deng2019arcface,kim2022adaface}, and the availability of large-scale training datasets \citep{guo2016ms,cao2018vggface2,zhu2021webface260m}. 
The large-scale training face recognition datasets are collected by crawling the Internet and without the individual's consent, raising privacy concerns. This has created important ethical and legal challenges regarding the collecting,  distribution, and use of such large-scale datasets~\citep{Nature_Machine_Intelligence_Datasets}. Considering such concerns, some popular face recognition datasets, such as MS-Celeb~\citep{guo2016ms} and VGGFace2~\citep{cao2018vggface2}, have been retracted.

With the development of generative models, generating synthetic datasets has become a promising solution to address privacy concerns in large-scale datasets \citep{melzi2024frcsyn,shahreza2024sdfr}. 
In spite of several face generator models in the literature~\citep{deng2020disentangled,karras2019style,karras2020analyzing,rombach2022high,chan2022efficient}, generating a synthetic face recognition model that can replace real face recognition datasets and be used to train a new face recognition model from scratch is a challenging task. 
In particular, the generated synthetic face recognition datasets require adequate \textit{inter-class} and \textit{intra-class} variations. While conditioning the generator models on different attributes can help increasing \textit{intra-class} variations, increasing \textit{inter-class} variations remains a difficult task.

\begin{figure}
    \centering
    \includegraphics[width=0.90\linewidth]{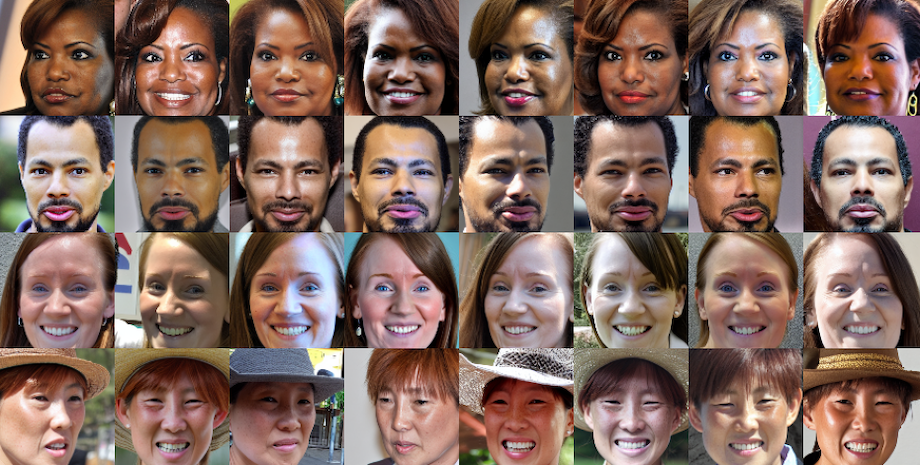}
    \caption{Sample face images from the HyperFace dataset}
    \label{fig:intro:sample_images}
\end{figure}

In this paper, we focus on the generation of synthetic face recognition datasets and formulate the dataset generation process as a packing problem on the embedding space (represented on the surface of a hypersphere) of a pretrained face recognition model.  
We investigate different packing strategies and show that with a simple optimization, we can find a set of reference embeddings for synthetic subjects that has a high inter-class variation. 
We also propose a regularization term in our optimization to keep the optimized embedding on the manifold of face embeddings.
After finding optimized embeddings, we use a face generative model that can generate face images from embeddings on the hypersphere, and generate synthetic face recognition datasets.
We use our generated synthetic face recognition datasets, called HyperFace, to train face recognition models. We evaluate the recognition performance of models trained using synthetic datasets, and show that our optimization and packing approach can lead to new synthetic datasets that can be used to train face recognition models. We also compare trained models with our generated dataset to models trained with previous synthetic datasets, where our generated datasets achieve competitive performance with state-of-the-art synthetic datasets in the literature. Figure~\ref{fig:intro:sample_images} illustrates sample face images from our synthetic dataset.

The remainder of this paper is organized as follows.
In Section \ref{sec:method}, we present our problem formulation and describe our proposed method to generate synthetic face datasets.  
In Section \ref{sec:experiments}, we provide our experimental results and evaluate our synthetic datasets. 
In Section~\ref{sec:related-work}, we review related work in the literature. 
Finally, we conclude the paper in Section~\ref{sec:conclusion}.

\section{Problem Formulation and Proposed Method}\label{sec:method}
\subsection{Problem Formulation}
\paragraph{Identity Hypersphere:}
Let us assume that we have a pretrained face recognition model $F:\mathcal{I}\rightarrow\mathcal{X}$, which can extract identity features (a.k.a. embedding) $\bm{x}\in\mathcal{X}\subset\mathbb{R}^{n_\mathcal{X}}$ from each face image $\bm{I}\in\mathcal{I}$. Without loss of generality, we can assume that the extracted identity features cover a unit hypersphere\footnote{If the identity embedding $\vx$ extracted by $F(.)$ is not normalized, we normalize it such that $||\vx||_2=1$.}, i.e., $||\vx||_2=1, \forall \vx \in \mathcal{X}$.

\paragraph{Representing Synthetic Dataset on the Identity Hypersphere:}
We can represent a synthetic face recognition dataset $\mathcal{D}$ on this hypersphere by finding the embeddings for each image in the dataset. For simplicity, let us assume that for subject $i$ in the synthetic face dataset, we can have a reference face image $\bm{I}_{\text{ref},i}$ and reference embedding $\bm{x}_{\text{ref},i}=F(\bm{I}_{\text{ref},i})$. Note that 
the reference face image $\bm{I}_{\text{ref},i}$ may already exist in the synthetic dataset $\mathcal{D}$, otherwise we can assume the reference embedding $\bm{x}_{\text{ref},i}$ as the average of embeddings of all images for subject $i$ in the dataset $\mathcal{D}$.
Therefore, the synthetic face recognition dataset $\mathcal{D}$ with $n_\text{id}$ number of subjects can be represented as a set of reference embeddings $\{\bm{x}_{\text{ref},i}\}_{i=1}^{n_\text{id}}$.

\subsection{HyperFace Synthetic Face Dataset}
\paragraph{HyperFace Optimization Problem:}
By representing a synthetic dataset $\mathcal{D}$ on the identity hypersphere as a set of reference embeddings $\{\bm{x}_{\text{ref},i}\}_{i=1}^{n_\text{id}}$, we can raise the question that ``\textit{How should reference embeddings cover the identity hypersphere?}"
To answer this question, we remind that the distances between reference embeddings indicate the inter-class variation in the synthetic face recognition dataset $\mathcal{D}$. 
Therefore, since we would like to have a high inter-class variation in the generated dataset $\mathcal{D}$, we can say that we need to maximize the distances between reference embeddings $\{\bm{x}_{\text{ref},i}\}_{i=1}^{n_\text{id}}$. 
In  other words, we need to solve the following optimization problem:

\begin{equation}
    \max \quad \min_{\{\bm{x}_\text{ref}\}, i\neq j}  d(\bm{x}_{\text{ref},i},\bm{x}_{\text{ref},j}) \quad \quad \text{subject to}\quad ||\bm{x}_{\text{ref},k}||_2=1, \forall k \in \{1, ..., n_\text{id}\}\label{eq:optmization}
\end{equation}

where $d(\cdot,\cdot)$ is a distance function.

\paragraph{Solving the HyperFace Optimization:}
The optimization problem stated in Eq.~\ref{eq:optmization} is a well-known optimization problem, which is known as 
spherical code optimization~\citep{SpherePackings} or the Tammes problem~\citep{tammes1930origin}, where the goal is to pack a given number of points (e.g.,  particles,  pores, electrons, etc.) on the surface of a unit sphere such that the minimum distance between points is maximized. 
The optimal solutions for this problem are studied for small dimensions and the number of points~\citep{boroczky1983problem,hars1986tammes,musin2012strong,musin2015tammes}. However, for a large dimension and a high number of points, there is no closed-form solution in the literature~\citep{tokarchuk2024matter}. 
For a large dimension and a high number of points, there are different approaches for solving this optimization problem (such as geometric optimization, numerical optimization, etc.). 
However, for a large dimension of hypersphere (i.e., $n_\mathcal{X}$) and a \textit{very} large number of points (i.e., the number of subjects $n_\text{id}$, such as 10,000 identities in our problem), solving this optimization can be computationally expensive. To address this issue, we solve the optimization problem with an iterative approach based on gradient descent. 
To this end, we can randomly initialize the reference embeddings and find the optimised reference embeddings $\{\bm{x}_{\text{ref},i}\}_{i=1}^{n_\text{id}}$ using the Adam optimizer~\citep{kingma2014adam}. 
This allows us to solve the optimization with a reasonable computation resource. For example, we can solve the optimization for $n_\mathcal{X}=512$ and $n_\text{id}=10,000$ on a system equipped with a single NVIDIA 3090 GPU in 6 hours.

\begin{figure}
    \centering
    \includegraphics[width=1\linewidth, trim={1.015cm 0.5cm 1.025cm 0.3cm},clip]{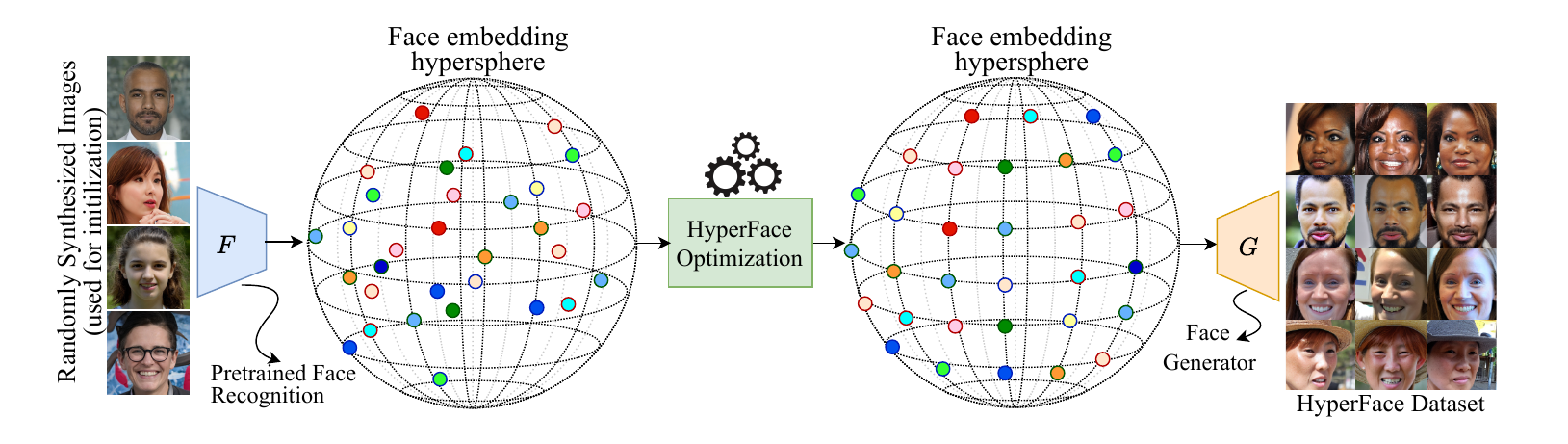}
    \caption{Block diagram of HyperFace Dataset Generation: We start from randomly synthesized face images and extract their embeddings using a pretrained face recognition model $F$. The extracted embeddings are normalised and used as initial points $\{\bm{x}_{\text{ref},i}\}_{i=1}^{n_\text{id}}$ in our HyperFace optmization. The HyperFace optimization tries to increase the \textit{intra-class} variation for synthetic identities on the manifold of the face recognition model over the hypersphere using a regularization term. The resulting points are then used by a face generator model $G$, which can generate synthetic face images from the embeddings.}
    \label{fig:blockdiagram}
\end{figure}

\paragraph{Regularization:} While we solve the optimization problem in Eq.~\ref{eq:optmization} on the surface of a hypersphere, we should note that the manifold of embeddings $\mathcal{X}$ does not necessarily cover the whole surface of the hypersphere. This means if we get out of the distribution of embeddings $\mathcal{X}$, we may not be able to generate face images from such embeddings. Therefore, we need to add a regularization term to our optimization problem that tends to keep the reference embeddings on the manifold of embeddings $\mathcal{X}$. To this end, we consider a set of face images $\{\bm{I}_i\}_{i=1}^{n_\text{gallery}}$ as a gallery of images\footnote{The gallery of face images $\{\bm{I}_i\}_{i=1}^{n_\text{gallery}}$ can be generated using an unconditional face generator network such as StyleGAN \citep{karras2020analyzing}, Latent Diffusion Model (LDM) \citep{rombach2022high}, etc.} and extract their embeddings to have set of valid embeddings $\{\bm{x}_{i}\}_{i=1}^{n_\text{gallery}}$. Then, we try to minimize the distance of our reference embeddings $\{\bm{x}_{\text{ref},i}\}_{i=1}^{n_\text{id}}$ to the set of embeddings $\{\bm{x}_{i}\}_{i=1}^{n_\text{gallery}}$, which approximates the manifold of embeddings $\mathcal{X}$. To this end, for each reference embedding $\{\bm{x}_{\text{ref},i}\}_{i=1}^{n_\text{id}}$, we find the closest embedding in $\{\bm{x}_i\}_{i=1}^{n_\text{gallery}}$ and minimize their distance. We can write the optimization in Eq.~\ref{eq:optmization} as a regularized  min-max optimization as follows:

\begin{equation}
    \begin{split}
    \min  \quad   &\max_{\{\bm{x}_\text{ref}\}, i\neq j}  -d(\bm{x}_{\text{ref},i},\bm{x}_{\text{ref},j}) + \alpha \underbrace{\frac{1}{n_\text{id}} \sum_{k=1}^{n_\text{id}} \min_{\{\bm{x}_{g}\}_{g=1}^{n_\text{gallery}}} d(\bm{x}_{\text{ref},k},\bm{x}_{g}); 
    }_\text{regularization}
    \\
     &\text{subject to}\quad ||\bm{x}_{\text{ref},k}||_2=1, \forall k \in \{1, ..., n_\text{id}\},
    \end{split}
\end{equation}\label{eq:optmization_regularization}

where $\alpha$ is a hyperparameter that controls the contribution of the regularization term in the optimization. To provide more flexibility in our optimization, we consider the size of gallery $n_\text{gallery}$ to be greater or equal to the number of identities  $n_\text{id}$ in the synthetic dataset (i.e., $n_\text{gallery}\geq n_\text{id}$).

\paragraph{Initialization:} To solve the HyperFace optimization problem  in Eq.~\ref{eq:optmization} using Algorithm~\ref{alg}, we need to initialize the reference embeddings $\{\bm{x}_{\text{ref},i}\}_{i=1}^{n_\text{id}}$. To this end, we can generate $n_\text{id}$ number random synthetic images $\{\bm{I}_i\}_{i=1}^{n_\text{id}}$ using a face generator model, such as StyleGAN \citep{karras2020analyzing}, Latent Diffusion Model (LDM) \citep{rombach2022high}. These models use a noise vector as the input and can generate synthetic face images in an unconditional setting. Then, after generating $\{\bm{I}_i\}_{i=1}^{n_\text{id}}$ images, we can extract their embeddings using the face recognition model $F(\cdot)$ and use the extracted embeddings as initialization values for the reference embeddings $\{\bm{x}_{\text{ref},i}\}_{i=1}^{n_\text{id}}$ in Algorithm~\ref{alg}. 

\begin{algorithm}[tb]
	\small
	\caption{HyperFace Optimization for Finding Reference Embeddings}
	\label{alg}
	\begin{algorithmic}[1]
		\State \textbf{Inputs}: 
		\quad $\lambda:$ learning rate, $n_\text{itr}:$ number of iterations, $\{\bm{x}_{g}\}_{g=1}^{n_\text{gallery}}:$ embeddings of a gallery of face images,\\ \quad\quad\quad\quad $\alpha:$ hyperparameter (contribution of regularization).
		\State \textbf{Output}: 
		\quad $\bm{X}_\text{ref}=\{\bm{x}_{\text{ref},i}\}_{i=1}^{n_\text{id}}:$ optimized reference embeddings.
		\State \textbf{Procedure:}
            \State \quad Initialize reference embeddings $\{\bm{x}_{\text{ref},i}\}_{i=1}^{n_\text{id}}$
            
            \State \quad  \textbf{For} $n=1, .., n_\text{itr}$  \textbf{do}
            
            \State \quad \quad Find $\bm{x}_{\text{ref},i},\bm{x}_{\text{ref},j}\in\bm{X}_\text{ref}$ which have minimum distance $d(\bm{x}_{\text{ref},i},\bm{x}_{\text{ref},j})$
            \State \quad \quad $\text{Reg} \leftarrow \frac{1}{n_\text{id}} \sum_{k=1}^{n_\text{id}} \min_{\{\bm{x}_{g}\}_{\text{gallery}}} d(\bm{x}_{\text{ref},k},\bm{x}_{g})$ \Comment{Calculate the regularization term}
		\State \quad  \quad $\text{cost} \leftarrow -d(\bm{x}_{\text{ref},i},\bm{x}_{\text{ref},j})$
		\State \quad  \quad $\bm{X}_\text{ref} \leftarrow \bm{X}_\text{ref}  - \text{Adam} (\nabla \text{cost}, \lambda)$
            \State \quad \quad $\bm{X}_\text{ref} \leftarrow \text{normalize}(\bm{X}_\text{ref})$ \Comment{To ensure that resulting embeddings $\bm{X}_\text{ref}$ remain on the hypersphere.}
		\State \quad  \textbf{End For}
		\State \textbf{End Procedure}
	\end{algorithmic}
\end{algorithm}

\paragraph{Image Generation:}
After we find the reference embeddings $\{\bm{x}_{\text{ref},i}\}_{i=1}^{n_\text{id}}$ using Algorithm~\ref{alg}, we can use an identity-conditioned image generator model to generate face images from reference embeddings. 
To this end, we use a recent face generator network \citep{papantoniou2024arc2face}, which is based on probabilistic diffusion models. The diffusion face generator model $G(\cdot,\cdot)$ can generate a face image $\bm{I}=G(\bm{x}_\text{ref},\bm{z})$ from reference embedding $\bm{x}_\text{ref}$ and a random noise $\bm{z}\sim \mathcal{N}(0,\mathbb{I}^\text{DM})$.
Therefore, by changing the random noise $\bm{z}$ and sampling different noise vectors, we can generate different samples for the reference embedding $\bm{x}_\text{ref}$. In addition, to increase intra-class variation, we add Gaussian noise $\bm{v}\sim\mathcal{N}(0,\mathbb{I}^{n_\mathcal{X}})$ to the reference embedding  $\bm{x}_\text{ref}$, and then normalize it to locate it on the hypersphere. In summary, we can generate different samples for each  reference embedding  $\bm{x}_\text{ref}$ by changing $\bm{z}$ and $\bm{v}$ noise vectors as follows:
\begin{equation}
    \bm{I}=G(\frac{\bm{x}_\text{ref} + \beta \bm{v}}{||\bm{x}_\text{ref} + \beta \bm{v}||_2},\bm{z}), \quad \bm{v}\sim\mathcal{N}(0,\mathbb{I}^{n_\mathcal{X}}) , \bm{z}\sim \mathcal{N}(0,\mathbb{I}^\text{DM}),\label{eq:image_generation}
\end{equation}
where $\beta$ is a hyperparamter that controls the variations to the reference embedding. Figure~\ref{fig:blockdiagram} depicts the block diagram of our synthetic dataset generation process. Algorithm~\ref{alg_generation} in Appendix~\ref{app:data_generation} also present a pseudo-code of dataset generation process.

\section{Experiments}\label{sec:experiments}
\subsection{Experimental Setup}
\paragraph{Dataset Generation:}
For solving the HyperFace optimization in Algorithm~\ref{alg}, we use an initial learning rate of $\lambda=0.01$ and reduce the learning rate by power $0.75$ every $5,000$ iterations for a total number of iterations $n_\text{itr}=100,000$. We also consider cosine distance, which is commonly used in face recognition systems for the comparison of face embeddings, as our distance function $d(\cdot,\cdot)$. 
For the hyperparameters $\alpha$ and $\beta$, we consider default values of $0.5$ and $0.01$, respectively, in our experiments. We also consider the size of gallery to be the same as the number of identities, and explore other cases where $n_\text{gallery}> n_\text{id}$ in our ablation study. We generate 64 images, by default, per each identity in our generated datasets and explore other numbers of images in our ablation study. 

We use ArcFace~\citep{deng2019arcface} as the pretrained face recognition model $F(\cdot)$ with the embedding dimension of $n_\mathcal{X}=512$ and
use a pretrained generator model~\citep{papantoniou2024arc2face} to generate face images from ArcFace embeddings. 
After generating face images, we align all face images using a pretrained MTCNN~\citep{zhang2016joint} face detector model. For our regularization, we randomly generate images with StyleGAN~\citep{karras2020analyzing} as default, and investigate other generator models in our ablation study.

\paragraph{Evaluation:}
To evaluate the generated synthetic datasets, we use each generated datasets as a training dataset for training a face recognition model. To this end, we use the iResNet50 backbone and train the model with AdaFace loss function~\citep{kim2022adaface} using the Stochastic Gradient Descent (SGD) optimizer with the initial learning rate 0.1 and a weight decay of $5\times10^{-4}$ for 30 epochs with the batch size of 256.  
After training the face recognition model with the synthetic dataset, we benchmark the performance of the trained face recognition models on
different benchmarking datasets of real images, including Labeled Faces in the Wild (LFW) \citep{huang2008labeled}, Cross-age LFW (CA-LFW) \citep{zheng2017cross}, CrossPose LFW (CP-LFW) \citep{zheng2018cross}, Celebrities in Frontal-Profile in the Wild (CFP-FP) \citep{sengupta2016frontal}, and AgeDB-30 \citep{moschoglou2017agedb}  datasets. 
For consistency with prior works, we report recognition accuracy calculated using 10-fold cross-validation for each of benchmarking datasets. 
%
The source code of our experiments and generated datasets are publicly available\footnote{Project page: \href{https://www.idiap.ch/paper/hyperface}{\color{magenta}https://www.idiap.ch/paper/hyperface}}.

\begin{table*}[tb]
    \centering
	\renewcommand{\arraystretch}{1.005}
	\setlength{\tabcolsep}{3pt}
    \caption{Comparison of recognition performance of face recognition models trained with different synthetic datasets and a real dataset (i.e., CASIA-WebFace). The performance reported for each dataset is in terms of accuracy and best value for each benchmark is emboldened.
    }
    \label{tab:exp:comparison}
    \scalebox{0.935}{
    \begin{tabular}{lllrrcccccc}
        \toprule
        Dataset name & 
            \# IDs &
            \# Images &
            LFW    &
            CPLFW &
            CALFW &
            CFP &
            AgeDB 
            \\
        \midrule
        SynFace \scalebox{0.75}{\citep{qiu2021synface}} & 
            10'000 & 999'994 &
            86.57 & 65.10 & 70.08 & 66.79 & 59.13  
            \\
        SFace \scalebox{0.75}{\citep{boutros2022sface}} &
            10'572 & 1'885'877 &
            93.65 & 74.90 & {80.97} & 75.36 & 70.32  
            \\
        Syn-Multi-PIE \scalebox{0.75}{\citep{colbois2021use}} & 
            10'000 & 180'000 &
            78.72 & 60.22 & 61.83 & 60.84 & 54.05  
            \\ 
        IDnet \scalebox{0.75}{\citep{kolf2023identity}} &
            10'577 & 1'057'200 & 84.48 & 68.12 & 71.42& 68.93& 62.63  
            \\
        ExFaceGAN \scalebox{0.75}{\citep{boutros2023exfacegan}} &
            10'000 & 599'944 &
            85.98 & 66.97 & 70.00 & 66.96 & 57.37   \\
        GANDiffFace \scalebox{0.75}{\citep{melzi2023gandiffface}} &
            10'080 & 543'893 &
            {94.35} & {76.15} & 79.90 & {78.99} & 69.82  
            \\
        Langevin-\scalebox{0.925}{Dispersion}  \scalebox{0.725}{\citep{geissbuhler2024synthetic}}& 
            10'000 & 650'000 &
            94.38 & 65.75 & 86.03 & 65.51 & 77.30  \\
        Langevin-DisCo  \scalebox{0.75}{\citep{geissbuhler2024synthetic}}& 
            10'000 & 650'000 &
            97.07 & 76.73 & 89.05 & 79.56 & 83.38 \\
        DigiFace-1M \scalebox{0.75}{\citep{bae2023digiface}} & 
            109'999 & 1'219'995 &
            90.68 & 72.55 & 73.75 & 79.43 & 68.43  \\ 
        IDiff-Face \scalebox{0.9}{(Uniform)} \scalebox{0.75}{\citep{boutros2023idiff}} & 
            10'049 & 502'450 &
            98.18 & 80.87    & 90.82    & 82.96 & 85.50  \\ 
        IDiff-Face \scalebox{0.9}{(Two-Stage)} \scalebox{0.75}{\citep{boutros2023idiff}} & 
            10'050 & 502'500 &
            98.00 & 77.77 &    88.55 & 82.57 & 82.35  \\ 
        DCFace \scalebox{0.75}{\citep{kim2023dcface}} & 
            10'000 & 500'000 &
            {98.35} & {83.12} & \textbf{91.70} & {88.43} & \textbf{89.50} \\ 
        \textbf{HyperFace [ours]} &
            10'000 & 500'000   & \textbf{98.50} & \textbf{84.23} & {89.40}  & \textbf{88.83}          & 86.53        
            \\
        \midrule
                    CASIA-WebFace \scalebox{0.75}{\citep{yi2014learning}} & 
            10'572 & 490'623 &
            99.42 & 90.02 & 93.43 & 94.97 & 94.32 \\ 
        \bottomrule
    \end{tabular}
}
\vspace{-5pt}
\end{table*}

\subsection{Analysis}
\paragraph{Comparison with Previous Synthetic Datasets:}
We compare the recognition performance of face recognition models trained with our synthetic dataset and previous synthetic datasets in the literature. We use the published dataset for each method and train all models with the same configuration for different datasets to prevent the effect of other hyperparameters (such as number of epochs, batch size, etc.). For a fair comparison, we consider the versions of datasets with a similar number of identities\footnote{Only in the dataset used for DigiFace~\citep{bae2023digiface} there are more identities, because there is only one version available for this dataset, which has a greater number of identities compared to other existing synthetic datasets.}, if there are different datasets available for each method. Table~\ref{tab:exp:comparison} compares the recognition performance of face recognition models trained with different synthetic datasets. As the results in this table show, our method achieves state-of-the-art performance in training face recognition using synthetic data.
Figure~\ref{fig:intro:sample_images} illustrates sample face images from our synthetic dataset. 
Figure~\ref{fig:appendix:sample_images} of appendix also presents more sample images from HyperFace dataset.


\begin{wraptable}{R}{0.6\textwidth}
	\vspace{-10pt}
	\begin{small}
		\begin{minipage}{0.6\textwidth}
\caption{Ablation study on the effect of number of images}
\label{tab:ablation:n_images}
\begin{tabular}{cccccc}
\toprule
{Image/ID} & LFW            & CPLFW          & CALFW          & CFP            & AgeDB          \\ \midrule
32                         & \textbf{98.70}  & 84.17          & 88.83          & {88.74} & 86.33          \\ \hline
50                         & 98.50           & {84.23} & {89.40}  & 88.83          & 86.53          \\ \hline
64                         & {98.67} & \textbf{84.68} & \textbf{89.82} & 89.14          & {87.07} \\ \hline
96                         & 98.42          & 84.15          & 89.00             & \textbf{89.51} & 87.45          \\ \hline
128                        & 98.20           & 83.63          & \textbf{89.82} & 89.31          & \textbf{87.62} \\ \bottomrule
\end{tabular}
\end{minipage}
\end{small}
\vspace{-5pt}
\end{wraptable}
\paragraph{Ablation Study:}
In our dataset generation method, there are different hyperparameters which can affect the HyperFace optimization and the generated synthetic datasets.
Table~\ref{tab:ablation:n_images} reports the ablation study on the number of images generated per each synthetic identity in our experiments. 
As the results in Table~\ref{tab:ablation:n_images} show, increasing the number of images per identity improves the recognition performance of trained face recognition model, but it tends to saturate after 64 images per identity.

\begin{wraptable}{R}{0.6\textwidth}
	\vspace{-10pt}
	\begin{small}
		\begin{minipage}{0.6\textwidth}
\caption{Ablation study on the effect of number of identities}
\label{tab:ablation:n_id}
\begin{tabular}{lccccc}
\toprule
\# IDs & LFW            & CPLFW          & CALFW          & CFP            & AgeDB          \\ \midrule
10k   & {98.67} & {84.68} & 89.82          & 89.14          & 87.07          \\ \hline
30k   & \textbf{98.82} & {85.23} & {91.12} & {91.74} & {89.42}  \\  \hline
50k   & 98.27 & \textbf{85.60}  & \textbf{91.48 }& \textbf{92.24} & \textbf{90.40}  \\ 
\bottomrule
\end{tabular}
\end{minipage}
\end{small}
\vspace{-10pt}
\end{wraptable}

Table~\ref{tab:ablation:n_id} also compares the number of identities in the generated dataset, including 10k, 20k, and 50k identities. 
As the results in Table~\ref{tab:ablation:n_id} show, increasing the number of identities improves the recognition performance of trained face recognition model on the benchmarking datasets. The results in this table demonstrates that we can still increase the number of identities and scale our dataset generation without saturating the performance. The main issue for increasing the size of dataset is computation resource, which is discussed in detail in Appendix~\ref{app:complexity}. We can also reduce the complexity of our optimization for large number of identities, which is discussed in detail in Appendix~\ref{app:stochastic-HyperFace}. 

\begin{wraptable}{R}{0.6\textwidth}
	\vspace{-10pt}
	\begin{small}
		\begin{minipage}{0.6\textwidth}
\caption{Ablation study on the effect of $n_\text{gallery}$}
\label{tab:ablation:n_galllery}
\begin{tabular}{cccccc}
\toprule
$n_\text{gallery}$ & LFW            & CPLFW          & CALFW          & CFP            & AgeDB          \\ \midrule
10K                                    & 98.53          & 84.00             & 88.92          & \textbf{89.34} & 85.9           \\ \hline
20K                                    & 98.50           & \textbf{84.32} & \textbf{89.28} & 89.17          & 86.00             \\ \hline
50K                                    & \textbf{98.72} & 84.23          & 88.72          & 89.19          & \textbf{86.85}\\ \bottomrule
\end{tabular}
\end{minipage}
\end{small}
\vspace{-10pt}
\end{wraptable}
Table~\ref{tab:ablation:n_galllery} reports the recognition performance achieved for face recognition model trained with datasets with 10k identity and optimized with different numbers of gallery images. As the results in this table shows, increasing the size of gallery improves the performance of the trained model. However,  with 10,000 images we can still approximate the manifold of face embeddings on the hypersphere. 

As another ablation study, we use different source of images for the gallery set to use in our regularization and solve the HyperFace optimization. We use pretrained StyleGAN~\citep{karras2020analyzing} as a GAN-based generator model and a pretrained latent diffusion model~\citep{rombach2022high} as a diffusion-based generator model. We use these generator models and randomly generate some synthetic face images. In addition, for our ablation study, we consider some real images from BUPT dataset~\citep{wang2019racial} as a dataset of real face images. 
\begin{wraptable}{r}{0.6\textwidth}
	\begin{small}
		\begin{minipage}{0.6\textwidth}
\caption{Ablation study on the type of data in gallery}
\label{tab:ablation:gallery}
\begin{tabular}{lccccc}
\toprule
Gallery  & LFW   & CPLFW & CALFW & CFP   & AgeDB \\ \midrule
StyleGAN & 98.67 & 84.68 & 89.82 & 89.14 & 87.07 \\ \midrule
LDM      & 98.65 & 84.35 & 89.17 & 89.09 & 86.35 \\ \midrule
BUPT     & \textbf{98.70}  & \textbf{84.77} & \textbf{90.03} & \textbf{89.16} & \textbf{87.13} \\ \bottomrule
\end{tabular}
\end{minipage}
\end{small}
\vspace{-5pt}
\end{wraptable}
As the results in Table~\ref{tab:ablation:gallery} show, optimization with images from StyleGAN and LDM lead to comparable performance for the generated face recognition dataset. However, the real images in the BUPT dataset lead to superior performance. This suggests that the synthesized images cannot completely cover the manifold of embeddings and if we use real images as our gallery it can improve the generated dataset and recognition performance of our face recognition model.

\begin{wraptable}{R}{0.6\textwidth}
	\vspace{-10pt}
	\begin{small}
		\begin{minipage}{0.6\textwidth}
\caption{Ablation study on the effect of $\alpha$}
\label{tab:ablation:alpha}
\begin{tabular}{cccccc}
\toprule
$\alpha$ & LFW            & CPLFW          & CALFW          & CFP            & AgeDB          \\ \midrule
0                            & 98.40           & 84.15          & 88.87          & 89.31          & 86.48          \\ \hline
0.50                          & \textbf{98.67} & 84.68          & \textbf{89.82} & 89.14          & \textbf{87.07} \\ \hline
0.75                         & 98.62          & 84.32          & 89.48          & 89.67          & 86.72          \\ \hline
1.0                            & 98.55          & \textbf{84.72} & 89.10           & \textbf{89.76} & 86.63 \\ \bottomrule
\end{tabular}
\end{minipage}
\end{small}
\vspace{-10pt}
\end{wraptable}
We also study the effect of hyperparameters $\alpha$ and $\beta$ on the generated face recognition dataset. Table~\ref{tab:ablation:alpha} reports the ablation study for the contribution of regularization in our optimization ($\alpha$). As the results in this table shows, the regularization enhances the quality of generated dataset and improves the recognition performance of face recognition model.  
In fact, our regularization term helps our optimization to keep the points on the manifold of face recognition over the hypersphere, and therefore improves the quality of our synthetic dataset.

\begin{wraptable}{R}{0.6\textwidth}
	\vspace{-10pt}
	\begin{small}
		\begin{minipage}{0.6\textwidth}
\centering
\caption{Ablation study on the effect of $\beta$}
\label{tab:ablation:beta}
\begin{tabular}{cccccc}
\toprule
$\beta$ & {LFW}  & {CPLFW} & {CALFW} & {CFP}   & {AgeDB} \\
\midrule
0                           & 98.53         & 84.00             & 88.92          & 89.34          & 85.90           \\ \hline
0.005                       & 98.67         & 84.68          & 89.82          & 89.14          & 87.07          \\ \hline
0.010                        & \textbf{98.7} & \textbf{84.72} & 90.05          & 89.54          & 88.42          \\ \hline
0.020                        & 98.4          & 84.05          & \textbf{91.32} & \textbf{90.13} & \textbf{89.83} \\ \bottomrule
\end{tabular}
\end{minipage}
\end{small}
\vspace{-10pt}
\end{wraptable}
Similarly, Table~\ref{tab:ablation:beta} reports the ablation study for the effect of noise in data generation and augmentation (i.e., hyperparamter $\beta$ in in Eq.~\ref{eq:image_generation}). 
As can be seen, the added noise increases the variation for images of each subject and increases the performance of face recognition models trained with the generated datasets. 
With a larger value of $\beta$, the generated images for each identity have more variations, which increases the performance of the face recognition model trained with our synthetic dataset.

\begin{wraptable}{R}{0.6\textwidth}
	\vspace{-10pt}
	\begin{small}
		\begin{minipage}{0.6\textwidth}
\caption{Ablation study on the network structure}
\label{tab:ablation:network}
\begin{tabular}{lccccc}
\toprule
Network   & LFW            & CPLFW          & CALFW          & CFP            & AgeDB          \\ \midrule
ResNet18  & 98.33          & 81.38          & 88.53          & 86.03          & 85.27          \\ \hline
ResNet34  & 98.5           & 83.47          & 88.88          & 88.29          & 86.42          \\ \hline
ResNet50  & {98.67}          & {84.68}          & {89.82}          & {89.14}          & {87.07 }         \\ \hline
ResNet101 & \textbf{98.73} & \textbf{85.43} & \textbf{90.05} & \textbf{89.54} & \textbf{87.52} \\ \bottomrule
\end{tabular}
\end{minipage}
\end{small}
\vspace{-10pt}
\end{wraptable}
As another experiment, we consider different backbones and train face recognition models with different number of layers. As the results in Table~\ref{tab:ablation:network} show, increasing 
the number of layers improve the recognition performance of trained face recognition model. While this is expected and has been observed for training using large-scale face recognition datasets, it sheds light on more potentials in the generated synthetic datasets.

\subsection{Discussion}\label{subsec:exp:discussion}
\paragraph{Scaling Dataset Generation:} To increase the size of the synthetic face recognition dataset, we can increase the number of images per identity and also the number of samples per identity. In our ablation study, we investigated the effect of the number of images (Table~\ref{tab:ablation:n_images}) and the number of identities (Table~\ref{tab:ablation:n_id}) on the recognition performance of the face recognition model. However, increasing the size of the dataset requires more computation. Increasing the number of images in the dataset has linear complexity in our image generation step (i.e., $\mathcal{O}(n_\text{images})$, where $n_\text{images}$ is the number of images in the generated dataset). However, the complexity of solving the HyperFace optimization problem with iterative optimization in Algorithm~\ref{alg} has quadratic complexity (i.e., $\mathcal{O}(n_\text{id}^2)$). Therefore, solving this optimization for a larger number of identities requires much more computation resources. 
Meanwhile, most existing synthetic datasets in the literature have a comparable number of identities to our experiments. We should note that in our optimization, we considered all points in each iteration of optimization which introduces quadratic complexity to our optimization. However, we can solve the optimization with stochastic mini-batches of points on the embedding hypersphere, which can reduce the complexity in each iteration (i.e., $\mathcal{O}(b^2)$, where $b$ is the size of batch  and $b\leq n_\text{id}$). 
We further discuss the complexity of our optimization and dataset generation in Appendix~\ref{app:complexity} and present further analyses for stochastic optimization, that reduces the complexity of our optimization in Appendix~\ref{app:stochastic-HyperFace}.

\paragraph{Leakage of Identity:} In our dataset generation method, we used images synthesized by StyleGAN for initialization and regularization. Therefore, it is important if there is any leakage of privacy data in the images generated from StyleGAN in the final generated dataset. To this end, we  extract and compare embeddings from all the generated images to embeddings of all face images in the training dataset of StyleGAN. The highest similarity score between generated images and training dataset correspond to children images (as shown in Figure~\ref{fig:exp:leakage:children}) which are difficult to compare visually and conclude potential leakage. Figure~\ref{fig:exp:leakage:adult} illustrates images of highest scores excluding children. While there are some visual similarities in the images, it is difficult to conclude  leakage of identity in the generated synthetic dataset.
We further study the effect of identity leakage on the recognition performance of face recognition models in Appendix~\ref{app:identity_leakage_performance}.

\begin{figure}[tb]
    \centering
    \begin{subfigure}{0.475\textwidth}
        \centering
    \begin{minipage}{0.05\textwidth}
        \vspace{-50pt}\rotatebox{90}{\small Synthesized}
    \end{minipage}%
        \hfill
        \begin{subfigure}{0.305\textwidth}
            \centering
            \includegraphics[width=\linewidth]{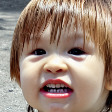} 
        \end{subfigure}
        \hfill
        \begin{subfigure}{0.305\textwidth}
            \centering
            \includegraphics[width=\linewidth]{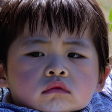} 
        \end{subfigure}
        \hfill
        \begin{subfigure}{0.305\textwidth}
            \centering
            \includegraphics[width=\linewidth]{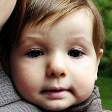} 
        \end{subfigure}
        
        \vspace{1pt} 
    \begin{minipage}{0.05\textwidth}
        \vspace{-50pt}\rotatebox{90}{\small Real}
    \end{minipage}%
    \hfill
        \begin{subfigure}{0.305\textwidth}
            \centering
            \includegraphics[width=\linewidth]{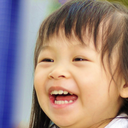} 
        \end{subfigure}
        \hfill
        \begin{subfigure}{0.305\textwidth}
            \centering
            \includegraphics[width=\linewidth]{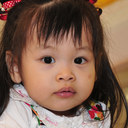} 
        \end{subfigure}
        \hfill
        \begin{subfigure}{0.305\textwidth}
            \centering
            \includegraphics[width=\linewidth]{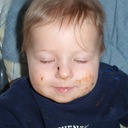} 
        \end{subfigure}
        
        \caption{children images}\label{fig:exp:leakage:children}
    \end{subfigure}
    \hfill
    \begin{subfigure}{0.475\textwidth}
        \centering
    \begin{minipage}{0.05\textwidth}
        \vspace{-50pt}\rotatebox{90}{\small Synthesized}
    \end{minipage}%
        \hfill
        \begin{subfigure}{0.305\textwidth}
            \centering
            \includegraphics[width=\linewidth]{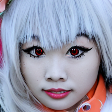} 
        \end{subfigure}
        \hfill
        \begin{subfigure}{0.305\textwidth}
            \centering
            \includegraphics[width=\linewidth]{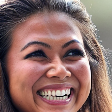} 
        \end{subfigure}
        \hfill
        \begin{subfigure}{0.305\textwidth}
            \centering
            \includegraphics[width=\linewidth]{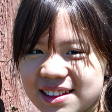} 
        \end{subfigure}
        
        \vspace{1pt} 
    \begin{minipage}{0.05\textwidth}
        \vspace{-50pt}\rotatebox{90}{\small Real}
    \end{minipage}%
    \hfill
        \begin{subfigure}{0.305\textwidth}
            \centering
            \includegraphics[width=\linewidth]{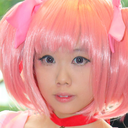} 
        \end{subfigure}
        \hfill
        \begin{subfigure}{0.305\textwidth}
            \centering
            \includegraphics[width=\linewidth]{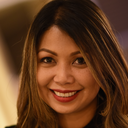} 
        \end{subfigure}
        \hfill
        \begin{subfigure}{0.305\textwidth}
            \centering
            \includegraphics[width=\linewidth]{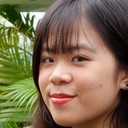} 
        \end{subfigure}
        
        \caption{adult images}\label{fig:exp:leakage:adult}
    \end{subfigure}

    \caption{Sample pairs of images with the highest similarity between face embeddings of images in synthesized dataset and training dataset of StyleGAN, which was used to generate random images for initialization and regularization in the HyperFace optimization.}
\end{figure}

\paragraph{Ethical Considerations:} State-of-the-art face recognition models are trained with large-scale face recognition datasets, which are crawled from the Internet, raising ethical and privacy concerns. To address the ethical and privacy concerns with web-crawled data, we can use synthetic data to train face recognition models. However, 
generating synthetic face recognition datasets also requires face generator models which are trained from a set of real face images. Therefore, we still rely on real face images in the generation pipeline. In our experiments, we investigated if we have leakage of identity in the generated synthetic dataset based on images used for initialization and regularization. However, there are other privacy-sensitive components used in our method. For example, we defined and solved our  optimization problem on the embedding hypersphere of a pretrained face recognition model. Therefore, for generating fully privacy-friendly datasets, the leakage of information by other components needs to be  investigated.

We should also note that while we tried to increase the inter-class variations in our method, there might be still a potential lack of diversity in different demography groups, stemming from implicit biases of the datasets used for training in our pipeline (such as the pretrained face recognition model, the gallery of images used for regularization, etc.). 
It is also noteworthy that the project on which the work has been conducted has passed an Institutional Ethical Review Board (IRB).

\section{Related Work}\label{sec:related-work}
With the advances in generative models, several synthetic face recognition datasets have been proposed in the literature. 
\cite{bae2023digiface} proposed DigiFace dataset where they used a computer-graphic pipeline to render different identities and also generate different images for each identity by introducing different variations based on
face attributes (e.g., variation in facial pose, accessories,
and textures). 
In contrast to \citep{bae2023digiface} , other papers in the literature used Generative Adversarial Networks (GANs) or probabilistic  Diffusion Models (PDMs) to generate synthetic face datasets.
\cite{qiu2021synface} proposed SynFace and  utilised DiscoFaceGAN~\citep{deng2020disentangled} to generate their dataset. They generated different synthetic identities using identity mixup by exploring the latent space of DiscoFaceGAN to increase intra-class variation and then used DiscoFaceGAN to generate different images for each identity. 

\cite{boutros2022sface} proposed SFace by training an identity-conditioned StyleGAN~\citep{karras2020analyzing} on the CASIA-WebFace~\citep{yi2014learning} and then generating the SFace dataset using the trained model. 
\cite{kolf2023identity} also trained an identity-conditioned StyleGAN~\citep{karras2020analyzing} in a three-player GAN framework to integrate the identity information into the generation process and proposed the IDnet dataset.
\cite{colbois2021use} proposed the Syn-Multi-PIE dataset using a  pretrained StyleGAN~\citep{karras2020analyzing}. They trained a support vector machine (SVM) to find directions for different variations (such as pose, illuminations, etc.) in the intermediate latent  space of a pretrained StyleGAN. Then, they used StyleGAN to generate different identities and synthesized different images for each identity by exploring the intermediate latent space of StyleGAN using linear combinations of calculated directions.
\cite{boutros2023exfacegan} proposed ExFaceGAN, where they  used SVM to disentangle the identity information in the latent space of pretrained GANs, and then generated different identities with several images within the corresponding identity boundaries.  
\cite{geissbuhler2024synthetic} used stochastic Brownian forces to sample different identities in the intermediate latent space of pretrained StyleGAN~\citep{karras2020analyzing} and generate different identities (named Langavien). Then they solved a similar dynamical equation in the latent space of StyleGAN to generate different images for each identity (named Langavien-Dispersion) and further explored the intermediate latent space of StyleGAN (named Langavien-DisCo).

\cite{melzi2023gandiffface} proposed GANDiffFace, a hybrid dataset generation framework, where they used StyleGAN to generate face images with different identities, and then used DreamBooth~\citep{ruiz2023dreambooth} as a diffusion-based generator, to generate different samples for each identity.
\cite{boutros2023idiff}   trained an identity-conditioned  diffusion model to generate synthetic face images and proposed IDiffFace datasets. They generated different samples using an unconditional model, and then generated different samples using their conditional diffusion model (named IDiff-Face Two-Stage). Alternatively, they uniformly sampled different identities and generated different samples for each identity using their identity-conditioned diffusion model (named IDiff-Face Uniform). 
\cite{kim2023dcface} proposed DCFace, where they trained a dual condition (style and identity conditions) face generator  model based on diffusion models on the CASIA-WebFace dataset. They used their trained diffusion model to generate different identities and different styles for each identity by varying identity and style conditions. 

\section{Conclusion}\label{sec:conclusion}
In this paper, we formalized the dataset generation as a packing problem on the hypersphere of a pretrained face recognition model. We focused on inter-class variation and designed our packing problem to increase the distance between synthetic identities. Then, we considered our packing problem as a regularized optimization and solved it with an iterative gradient-descent-based approach. Since the manifold of face embeddings does not cover the whole hypersphere, the regularization allows us to approximate the manifold of face embeddings and enhance the quality of generated face images. 
We used the generated datasets by our method (called HyperFace) to train face recognition models, and evaluated the trained models on several real benchmarking datasets. Our experiments demonstrate the effectiveness of our approach, which achieves state-of-the-art performance for training face recognition using synthetic data. We also presented an extensive ablation study to investigate the effect of each hyperparameter in our dataset generation method.

\section*{Acknowledgments}
\vspace{-10pt}
This research is based upon work supported by the Hasler foundation through the ``Responsible Face Recognition" (SAFER)  project.

\bibliography{iclr2025_conference}
\bibliographystyle{iclr2025_conference}

\newpage
\appendix

\section{Complexity and Required Computation Resource}\label{app:complexity}
The computation required to generate the synthetic datasets in our approach has two main parts: 
\begin{enumerate}
    \item \textbf{HyperFace Optimization:} The HyperFace optimization (Algorithm~\ref{alg}) considers all reference points $\{\bm{x}_{\text{ref},i}\}_{i=1}^{n_\text{id}}$ in the hypersphere and maximizes their distances. Therefore, this optimization considers all pairs of points and has quadratic complexity (i.e., $\mathcal{O}(n_\text{id}^2)$).

    Table~\ref{tab:app:runtime_hyperface} reports the runtime for solving the HyperFace optimization for different numbers of identities (i.e., $n_\text{id}$) on a system equipped with a single NVIDIA 3090 GPU. Note that this optimization process cannot be parallelized.

\begin{table}[bhp]
    \centering
\caption{Runtime for solving the HyperFace optimization (Algorithm~\ref{alg}) for different numbers of identities on a system equipped with a single NVIDIA 3090 GPU. }
\label{tab:app:runtime_hyperface}
\begin{tabular}{lc}
\toprule
  $n_\text{id}$  & HyperFace Optimization Runtime \\\midrule
10k &       6 hours                         \\\hline
20k &       11 hours                         \\\hline
30k &       23 hours                         \\\hline
50k &       84 hours                         \\ \bottomrule
\end{tabular}
\end{table}

    We should note that instead of solving the HyperFace optimization on all pairs of points,  we can solve the optimization stochastically in which in each iteration a mini-batch of points is considered and optimized. Therefore the complexity will become $\mathcal{O}(b^2)$, where $b$ is  size of mini-batch and $b\leq n_\text{id}$. This way the complexity of our method can be independent of the number of identities and significantly reduced (especially for $b\ll n_\text{id}$). Our stochastic optimization is further studied and discussed in Section~\ref{app:stochastic-HyperFace} of this Appendix.
    
    \item \textbf{Image Generation:} After solving the HyperFace optimization, we need to use the generator network in inference mode and generate the required number of images. Therefore, the generation of dataset has a \textit{linear} complexity with respect to the number of images (i.e., $\mathcal{O}(n_\text{images})$, where $n_\text{images}$ is the number of images in the generated dataset).
    The average runtime for generating a single synthetic face image on a system equipped with a single NVIDIA 3090 GPU is 1.25 seconds. For example, generating a dataset with 500,000 images takes about 174 hours on a single NVIDIA 3090 GPU. 
    Note that this optimization process can be parallelized, and therefore image generation can be deployed on a cluster or a farm of GPUs.
\end{enumerate}

\section{HyperFace Stochastic Optimization}\label{app:stochastic-HyperFace}
As discussed in Appendix~\ref{app:complexity}, HyperFace optimization (Algorithm~\ref{alg}) considers all reference points $\{\bm{x}_{\text{ref},i}\}_{i=1}^{n_\text{id}}$ and has a quadratic complexity $\mathcal{O}(n_\text{id}^2)$. 
To reduce this complexity, in each iteration,  we can randomly select a  mini-batch of $b$ points and only optimize the selected $b$ reference points instead of all $n_\text{id}$ reference points. This way in each iteration we can compare only $\binom{b}{2}$ pairs instead of $\binom{n_\text{id}}{2}$ pairs, and therefore the complexity of our optimization will become $\mathcal{O}(b^2)$.
In the following, we first theoretically prove that the expected mini-batch gradient approximates the full gradient, and then validate it with experimental analyses.

\begin{theorem}\label{theorem}
Let $\bm{X}_\text{ref}=\{\bm{x}_{\text{ref},i}\}_{i=1}^{n_\text{id}}$ represent $n_\text{id}$  points on a  $n_\mathcal{X}$-dimensional hypersphere $\mathcal{S}$. Consider an objective function:
\[
\mathcal{L}(\bm{X}_\text{ref}) = \frac{1}{{n_\text{id} \choose 2}} 
 \sum_{i=1}^{n_\text{id}} \sum_{j=i+1}^{n_\text{id}} \ell(\bm{x}_{\text{ref},i},\bm{x}_{\text{ref},j}),
\]
where \( \ell(\cdot,\cdot) \) denotes a pairwise function. The goal is to minimize \( \mathcal{L}(\bm{X}_\text{ref}) \) for $\bm{X}_\text{ref}=\{\bm{x}_{\text{ref},i}\}_{i=1}^{n_\text{id}}$. 
Suppose in each iteration, instead of computing \( \nabla \mathcal{L}(\bm{X}_\text{ref}) \) over all $\binom{n_\text{id}}{2}$ pairs, we approximate it using a random mini-batch \( B \subset \bm{X}_\text{ref} \) of size \( b \ll n_\text{id} \). Then, 
the expected batch gradient approximates the full gradient:
   \begin{equation}
   \mathbb{E}[\nabla \mathcal{L}_B(\bm{X}_\text{ref})] = \nabla \mathcal{L}(\bm{X}_\text{ref}).
   \end{equation}
\end{theorem}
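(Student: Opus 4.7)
The plan is to treat this as a standard unbiasedness-of-minibatch-gradient result and reduce it to a counting argument. First I would make the mini-batch objective explicit: for a uniformly random subset $B \subset \{1,\dots,n_\text{id}\}$ of size $b$, define
\[
\mathcal{L}_B(\bm{X}_\text{ref}) = \frac{1}{\binom{b}{2}} \sum_{\substack{i,j \in B \\ i<j}} \ell(\bm{x}_{\text{ref},i},\bm{x}_{\text{ref},j}),
\]
which is the natural analogue of $\mathcal{L}$ restricted to the batch. This is the quantity whose gradient Algorithm~\ref{alg} would use per iteration under the stochastic variant.

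Next I would show unbiasedness at the level of the loss itself and then pass to gradients by linearity. The key combinatorial observation is that for any fixed pair $(i,j)$ with $i<j$, the probability that both indices are selected in a uniform size-$b$ subset is
\[
\Pr\bigl[\{i,j\}\subset B\bigr] = \frac{\binom{n_\text{id}-2}{b-2}}{\binom{n_\text{id}}{b}} = \frac{\binom{b}{2}}{\binom{n_\text{id}}{2}}.
\]
Using this with linearity of expectation,
\[
\mathbb{E}\bigl[\mathcal{L}_B(\bm{X}_\text{ref})\bigr] = \frac{1}{\binom{b}{2}} \sum_{i<j} \Pr\bigl[\{i,j\}\subset B\bigr]\,\ell(\bm{x}_{\text{ref},i},\bm{x}_{\text{ref},j}) = \frac{1}{\binom{n_\text{id}}{2}} \sum_{i<j} \ell(\bm{x}_{\text{ref},i},\bm{x}_{\text{ref},j}) = \mathcal{L}(\bm{X}_\text{ref}).
\]
Finally I would exchange expectation and gradient. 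Since $B$ is a discrete random variable over a finite family of subsets, $\mathbb{E}[\mathcal{L}_B]$ is a finite linear combination of smooth functions of $\bm{X}_\text{ref}$, so differentiation passes through the sum and $\nabla \mathbb{E}[\mathcal{L}_B(\bm{X}_\text{ref})] = \mathbb{E}[\nabla \mathcal{L}_B(\bm{X}_\text{ref})]$, giving the claim.

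There is no real obstacle here; the only thing to be careful about is the normalization convention. The argument would break (up to a constant) if the mini-batch loss were normalized by $\binom{n_\text{id}}{2}$ or by $b$ rather than $\binom{b}{2}$, so I would state the convention used in the optimizer explicitly. I would also note, as a minor caveat, that the result is stated under sampling without replacement; with-replacement sampling introduces an additional diagonal contribution from $i=j$ pairs that must be excluded or separately handled, but the proof is essentially identical.
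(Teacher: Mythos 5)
Your proposal is correct and follows essentially the same route as the paper's proof: the key step in both is the pair-inclusion probability $\Pr[\{i,j\}\subset B]=\binom{b}{2}/\binom{n_\text{id}}{2}$ combined with linearity of expectation, the only cosmetic difference being that you establish unbiasedness of the loss first and then differentiate, whereas the paper applies the expectation directly to the gradient sum. Your added remarks on the normalization convention and on sampling without replacement are sensible but not needed for the result.
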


\begin{proof}
   For a batch \( B \) of size \( b \), the batch objective is:
   
   \begin{equation}
   \mathcal{L}_B(\bm{X}_\text{ref}) = \frac{1}{{b \choose 2}} \sum_{i \in B} \sum_{j \in B, j > i} \ell(\bm{x}_{\text{ref},i},\bm{x}_{\text{ref},j}).
   \end{equation}

   The full gradient of \( \mathcal{L}(\bm{X}_\text{ref}) \) is:
   \begin{equation}
   \nabla \mathcal{L}(\bm{X}_\text{ref}) = \frac{1}{{n_\text{id} \choose 2}} 
 \sum_{i=1}^{n_\text{id}} \sum_{j=i+1}^{n_\text{id}} \nabla \ell(\bm{x}_{\text{ref},i},\bm{x}_{\text{ref},j}).
   \end{equation}
   Similarly, the batch gradient is:
   \begin{equation}
   \nabla \mathcal{L}_B(\bm{X}_\text{ref}) = \frac{1}{{b \choose 2}} \sum_{i \in B} \sum_{j \in B, j > i} \nabla \ell(\bm{x}_{\text{ref},i},\bm{x}_{\text{ref},j}).
   \end{equation}
   The expectation over all possible batches \( B \) is:
   \begin{equation}\label{eq:app:expectation}
   \mathbb{E}[\nabla \mathcal{L}_B(\bm{X}_\text{ref})] = \frac{1}{{b \choose 2}} \sum_{i=1}^{n_\text{id}} \sum_{j=i+1}^{n_\text{id}} P[(i,j)\in B] \nabla \ell(\bm{x}_{\text{ref},i},\bm{x}_{\text{ref},j}), 
   \end{equation}
    where $P[(i,j)\in B]$ is the probability of selecting the pair $(i,j)$ in a random batch. 
    For uniformly sampled random batches:
    \begin{equation}
    P[(i,j)\in B]=\frac{\binom{b}{2}}{\binom{n_\text{id}}{2}}
    \end{equation} 
By substituting 
$P[(i,j)\in B]$ into the expectation in Eq.~\ref{eq:app:expectation}, we will have:
\begin{equation}
   \mathbb{E}[\nabla \mathcal{L}_B(\bm{X}_\text{ref})]  = \frac{1}{{n_\text{id} \choose 2}} 
 \sum_{i=1}^{n_\text{id}} \sum_{j=i+1}^{n_\text{id}} \nabla \ell(\bm{x}_{\text{ref},i},\bm{x}_{\text{ref},j}) = \nabla \mathcal{L}(\bm{X}_\text{ref}).
   \end{equation}
   Thus, the batch gradient is an unbiased estimator of the full gradient.

\end{proof}

\begin{corollary}
A special case for Theorem~\ref{theorem} is when function $\ell(\bm{x}_{\text{ref},i},\bm{x}_{\text{ref},j}) $ is defined as follows:
\begin{equation}\label{eq:optimal-morph}
\ell(\bm{x}_{\text{ref},i},\bm{x}_{\text{ref},j})  = 
\begin{cases}
                        -d(\bm{x}_{\text{ref},i},\bm{x}_{\text{ref},j}) & (i,j)=\text{argmax}_{\bm{X}_\text{ref}, i\neq j}  -d(\bm{x}_{\text{ref},i},\bm{x}_{\text{ref},j})  \\
                        0 &\text{otherwise}.
\end{cases}
\end{equation}
\end{corollary}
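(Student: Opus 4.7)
The plan is to verify that the specific $\ell$ introduced in the corollary fits into the hypotheses of Theorem~\ref{theorem}, so that the unbiasedness conclusion transfers without any further work. This is the right way to view the corollary because Algorithm~\ref{alg} only ever acts on the single closest pair per iteration (line~6), which is exactly what this choice of $\ell$ encodes, so the corollary is what formally licenses doing Algorithm~\ref{alg} stochastically on mini-batches.

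First I would fix the current iterate $\bm{X}_\text{ref}$ and let $(i^{*},j^{*}) = \text{argmax}_{i\neq j}\,\bigl(-d(\bm{x}_{\text{ref},i},\bm{x}_{\text{ref},j})\bigr)$ denote the pair of reference embeddings with the smallest pairwise cosine distance. With $\bm{X}_\text{ref}$ held fixed, the piecewise definition collapses into a perfectly ordinary pairwise function: $\ell(\bm{x}_{\text{ref},i^{*}},\bm{x}_{\text{ref},j^{*}}) = -d(\bm{x}_{\text{ref},i^{*}},\bm{x}_{\text{ref},j^{*}})$ and $\ell(\bm{x}_{\text{ref},i},\bm{x}_{\text{ref},j})=0$ for every other pair. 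This places us inside the regime of Theorem~\ref{theorem}.

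Next I would substitute this $\ell$ into the full and mini-batch objectives. The full loss reduces to $\mathcal{L}(\bm{X}_\text{ref}) = -\frac{1}{\binom{n_\text{id}}{2}}\,d(\bm{x}_{\text{ref},i^{*}},\bm{x}_{\text{ref},j^{*}})$, which up to the constant $1/\binom{n_\text{id}}{2}$ is exactly the per-iteration cost on line~8 of Algorithm~\ref{alg}. The batch loss $\mathcal{L}_B$ similarly reduces to $-\frac{1}{\binom{b}{2}}\,d(\bm{x}_{\text{ref},i^{*}},\bm{x}_{\text{ref},j^{*}})$ when $\{i^{*},j^{*}\}\subseteq B$ and to $0$ otherwise. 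Invoking Theorem~\ref{theorem}, or equivalently redoing its short calculation with the probability $P[(i^{*},j^{*})\in B] = \binom{b}{2}/\binom{n_\text{id}}{2}$, directly yields $\mathbb{E}[\nabla\mathcal{L}_B(\bm{X}_\text{ref})] = \nabla\mathcal{L}(\bm{X}_\text{ref})$.

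The main technical subtlety to flag is that $\ell$ as written depends on the whole set $\bm{X}_\text{ref}$ through the argmax, so strictly speaking it is not a function of its two arguments alone. The resolution I would stress is that Theorem~\ref{theorem} is being applied pointwise at each iterate: once the current $\bm{X}_\text{ref}$ is fixed for the gradient step, the argmax selects a deterministic pair and $\ell$ becomes an ordinary pairwise function to which the unbiasedness identity applies. A secondary, minor issue is uniqueness of the argmax: if several pairs tie for the minimum distance, one breaks ties by any fixed rule (or weights the tied pairs uniformly), and the identical reduction and conclusion go through. No other step appears delicate.
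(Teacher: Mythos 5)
Your verification is correct, and it supplies more than the paper does: the paper states this corollary with no proof at all, treating it as an immediate specialization of Theorem~\ref{theorem}. Your reduction --- freeze $\bm{X}_\text{ref}$, let the argmax pick a deterministic pair $(i^*,j^*)$, observe that $\ell$ collapses to a pairwise function supported on that single pair, and rerun the expectation with $P[(i^*,j^*)\in B]=\binom{b}{2}/\binom{n_\text{id}}{2}$ --- is exactly the calculation the paper leaves implicit, and your two flagged caveats (the argmax makes $\ell$ configuration-dependent rather than a genuine two-argument function; ties must be broken by a fixed rule) are the right ones to raise.

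One caution about the framing in your opening paragraph, which claims the corollary ``formally licenses'' Algorithm~\ref{alg_stochastic}. The unbiased estimator your proof actually constructs is the one whose gradient is nonzero only when the \emph{globally} closest pair $(i^*,j^*)$ happens to fall inside the sampled batch $B$, and is zero otherwise. Algorithm~\ref{alg_stochastic} computes something different: it takes the argmax \emph{within} $B$ on every iteration, so it always produces a nonzero update, acting on whichever pair is closest inside the batch. That within-batch estimator is not the restriction of the frozen $\ell$ to $B$, and it is generally biased (the within-batch minimum distance stochastically dominates the global minimum, and the active pair changes with $B$). So your proof correctly establishes the corollary as stated, but the gap between the corollary and the algorithm it is meant to justify is real --- it is a gap in the paper's argument rather than in yours, and your own observation that $\ell$ depends on the whole set through the argmax is precisely where it surfaces.
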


Therefore, we can rewrite Algorithm~\ref{alg} with a stochastic optimization as presented in Algorithm~\ref{alg_stochastic}.

\begin{algorithm}[bh]
	\small
	\caption{HyperFace Stochastic Optimization for Finding Reference Embeddings}
	\label{alg_stochastic}
	\begin{algorithmic}[1]
		\State \textbf{Inputs}: 
		\quad $\lambda:$ learning rate, $n_\text{itr}:$ number of iterations, $\{\bm{x}_{g}\}_{g=1}^{n_\text{gallery}}:$ embeddings of a gallery of face images,\\ \quad\quad\quad\quad $\alpha:$ hyperparameter (contribution of regularization), $b:$ size of mini-batch.
		\State \textbf{Output}: 
		\quad $\bm{X}_\text{ref}=\{\bm{x}_{\text{ref},i}\}_{i=1}^{n_\text{id}}:$ optimized reference embeddings.
		\State \textbf{Procedure:}
            \State \quad Initialize reference embeddings $\bm{X}_\text{ref}=\{\bm{x}_{\text{ref},i}\}_{i=1}^{n_\text{id}}$
            
            \State \quad  \textbf{For} $n=1, .., n_\text{itr}$  \textbf{do}
            
            \State \quad \quad Sample a random mini-batch $B \subset \bm{X}_\text{ref}$ of size $b$ \Comment{Sampling a random mini-batch}
            \State \quad \quad Find $\bm{x}_{\text{ref},i},\bm{x}_{\text{ref},j}\in B$ which have minimum distance $d(\bm{x}_{\text{ref},i},\bm{x}_{\text{ref},j})$
            \State \quad \quad $\text{Reg} \leftarrow \frac{1}{b} \sum_{k=1}^{b} \min_{\{\bm{x}_{g}\}_{\text{gallery}}} d(\bm{x}_{\text{ref},k},\bm{x}_{g})$ \Comment{Calculate the regularization term}
		\State \quad  \quad $\text{cost} \leftarrow -d(\bm{x}_{\text{ref},i},\bm{x}_{\text{ref},j})$
		\State \quad  \quad $B \leftarrow B  - \text{Adam} (\nabla \text{cost}, \lambda)$
            \State \quad \quad $B \leftarrow \text{normalize}(B)$ \Comment{To ensure that resulting embeddings B remain on the hypersphere.}
            \State \quad \quad Update $B$ in $\bm{X}_\text{ref}$ 
		\State \quad  \textbf{End For}
		\State \textbf{End Procedure}
	\end{algorithmic}
\end{algorithm}

To validate our theoretical analyses, we implement the HyperFace stochastic optimization (Algorithm~\ref{alg_stochastic}) and use the optimized embeddings to generate synthetic face recognition datasets. We consider 30,000 synthetic identities and solve HyperFace stochastic optimization (Algorithm~\ref{alg_stochastic}) for different batch sizes. In each case, after solving the stochastic optimization, we generate 50 synthetic images per identity as described in Section~\ref{sec:method} (Image Generation).
Then, we use the generated datasets to train face recognition models and evaluate the performance of trained face recognition models. 
Table~\ref{tab:app:performance_hyperface_stochastic} reports the performance of trained face recognition models. As the results in this table show face recognition models trained with datasets that are generated with stochastic mini-batch optimization achieve comparable performance to the face recognition model trained with the dataset that is generated based on full-batch optimization. Therefore, our experimental results meet our theoretical prediction in Theorem~\ref{theorem}.

\begin{table}[h]
\centering
\caption{Ablation study on the effect of number of batch size in HyperFace stochastic optimization (Algorithm~\ref{alg_stochastic}).}
\label{tab:app:performance_hyperface_stochastic}
\begin{tabular}{lccccc}
\toprule
Batch Size  & LFW            & CPLFW          & CALFW          & CFP            & AgeDB          \\ \midrule
 1,000 {\color{gray}(mini-batch)}   &   98.28 & 85.23 & 91.05 & 91.86 & 89.37  \\ \hline
 5,000 {\color{gray}(mini-batch)}   &   98.62 & 84.98 & 90.73 & 90.41 & 88.97  \\  \hline
 30,000 {\color{gray}(full-batch)}   &  98.38 & 85.07 & 90.88 & 91.57 & 89.60 \\ 
\bottomrule
\end{tabular}
\end{table}

In terms of complexity, the HyperFace stochastic optimization (Algorithm~\ref{alg_stochastic})  requires significantly less computation resources for solving the optimization. Table~\ref{tab:app:runtime_hyperface_stochastic} reports the runtime for solving the HyperFace stochastic optimization (Algorithm~\ref{alg_stochastic})  for different batch sizes and different numbers of identities and a fixed size of gallery on a system equipped with a single NVIDIA 3090 GPU. As the results in this table show, the complexity is independent of the number of identities 
(i.e., $n_\text{id}$) and depends on the size of mini-batch $b$. Comparing the results in Table~\ref{tab:app:runtime_hyperface_stochastic} and Table~\ref{tab:app:runtime_hyperface}, we can conclude that our stochastic optimization significantly reduced the complexity.

\begin{table}[bhp]
    \centering
\caption{Runtime for solving the HyperFace stochastic optimization (Algorithm~\ref{alg_stochastic}) for different numbers of identities on a system equipped with a single NVIDIA 3090 GPU. }
\label{tab:app:runtime_hyperface_stochastic}
\begin{tabular}{lcc}
\toprule
  Batch Size ($b$) & \# ID ($n_\text{id}$)  & HyperFace Stochastic Optimization Runtime \\\midrule
\multirow{3}{*}{1,000}  & 30k  & 0.4 hours  \\ \cline{2-3}
                        & 50k  & 0.5 hours  \\ \cline{2-3}
                        & 100k & 0.5 hours  \\ \hline
\multirow{3}{*}{5,000}  & 30k  & 2.2 hours \\ \cline{2-3}
                        & 50k  & 2.2 hours \\ \cline{2-3}
                        & 100k & 2.2 hours \\ \hline
\multirow{3}{*}{10,000} & 30k  & 8.8 hours \\ \cline{2-3}
                        & 50k  & 8.9 hours \\ \cline{2-3}
                        & 100k & 8.9 hours \\ \bottomrule
\end{tabular}
\end{table}

\section{Synthetic Datasets at Scale}
In Table~\ref{tab:exp:comparison} of the paper, we compared our face recognition models trained with our generated dataset and synthetic datasets in the literature. For previous datasets, we considered the available version of each dataset which has a similar number of identities (10k). 
In Table~\ref{tab:ablation:n_id}, we studied the effect of the number of identities in our dataset generation, where the results showed that we can scale our synthetic dataset and achieve a higher recognition performance.
In Table~\ref{tab:app:comparison_scale}, we compare the performance of face recognition models trained with our generated datasets and with all publicly available versions (particularly larger scale) of synthetic datasets in the literature. 
As the results in this table show, our generated datasets achieve competitive performance with synthetic datasets in the literature at scale. Comparing different datasets in the literature, DCFace, which outperformed previous datasets in Table~\ref{tab:exp:comparison}, does not achieve the best performance on any of the benchmarks for its larger version. In contrast, Langevin-DisCo achieves a significant improvement for its larger version with  30k identities compared to its smaller version with 10k identities. However, \cite{geissbuhler2024synthetic} reported a lower performance for their dataset with 50k identities compared to 30k identities, indicating limitations in further scaling the Langevin-DisCo dataset for more than 30k identities. Nevertheless, our method achieves improvement by scaling the number of identities (Table~\ref{tab:ablation:n_id}). In particular,  our dataset with 50k identities and 3.2M images achieves competitive performance with large-scale synthetic datasets in the literature.

\begin{table*}[tb]
    \centering
	\renewcommand{\arraystretch}{1.035}
	\setlength{\tabcolsep}{3pt}
    \caption{Comparison of recognition performance of face recognition models trained with \textit{the largest available versions} of   different synthetic datasets as well as a real dataset (i.e., CASIA-WebFace). The performance reported for each dataset is in terms of accuracy and best value for each benchmark is emboldened.
    }
    \label{tab:app:comparison_scale}
    \scalebox{0.935}{
    \begin{tabular}{lllrrcccccc}
        \toprule
        Dataset name & 
            \# IDs &
            \# Images &
            LFW    &
            CPLFW &
            CALFW &
            CFP &
            AgeDB 
            \\
        \midrule
        SynFace \scalebox{0.75}{\citep{qiu2021synface}} & 
            10'000 & 999'994 &
            86.57 & 65.10 & 70.08 & 66.79 & 59.13  
            \\
        SFace \scalebox{0.75}{\citep{boutros2022sface}} &
            10'572 & 1'885'877 &
            93.65 & 74.90 & {80.97} & 75.36 & 70.32  
            \\
        Syn-Multi-PIE \scalebox{0.75}{\citep{colbois2021use}} & 
            10'000 & 180'000 &
            78.72 & 60.22 & 61.83 & 60.84 & 54.05  
            \\ 
        IDnet \scalebox{0.75}{\citep{kolf2023identity}} &
            10'577 & 1'057'200 & 84.48 & 68.12 & 71.42& 68.93& 62.63  
            \\
        ExFaceGAN \scalebox{0.75}{\citep{boutros2023exfacegan}} &
            10'000 & 599'944 &
            85.98 & 66.97 & 70.00 & 66.96 & 57.37   \\
        GANDiffFace \scalebox{0.75}{\citep{melzi2023gandiffface}} &
            10'080 & 543'893 &
            {94.35} & {76.15} & 79.90 & {78.99} & 69.82  
            \\
        Langevin-\scalebox{0.925}{Dispersion}  \scalebox{0.725}{\citep{geissbuhler2024synthetic}}& 
            10'000 & 650'000 &
            94.38 & 65.75 & 86.03 & 65.51 & 77.30  \\
        Langevin-DisCo  \scalebox{0.75}{\citep{geissbuhler2024synthetic}}& 
            10'000 & 650'000 &
            97.07 & 76.73 & 89.05 & 79.56 & 83.38 \\
          Langevin-DisCo  \scalebox{0.75}{\citep{geissbuhler2024synthetic}}& 
            30'000 & 1'950'000 &
            \textbf{{98.97}} & {81.52} & \textbf{{93.95}} & {83.77} & \textbf{{93.32}}  \\     
        DigiFace-1M \scalebox{0.75}{\citep{bae2023digiface}} & 
            109'999 & 1'219'995 &
            90.68 & 72.55 & 73.75 & 79.43 & 68.43  \\ 
        IDiff-Face \scalebox{0.9}{(Uniform)} \scalebox{0.75}{\citep{boutros2023idiff}} & 
            10'049 & 502'450 &
            98.18 & 80.87    & 90.82    & 82.96 & 85.50  \\ 
        IDiff-Face \scalebox{0.9}{(Two-Stage)} \scalebox{0.75}{\citep{boutros2023idiff}} & 
            10'050 & 502'500 &
            98.00 & 77.77 &    88.55 & 82.57 & 82.35  \\ 
        DCFace \scalebox{0.75}{\citep{kim2023dcface}} & 
            10'000 & 500'000 &
            {98.35} & {83.12} & {91.70} & {88.43} & {89.50} \\ 
        DCFace \scalebox{0.75}{\citep{kim2023dcface}} & 
            60'000 & 1'200'000 &
            {98.90} & {{84.97}} & {92.80} & {89.04} & {91.52}  \\ 
        \textbf{HyperFace [ours]} &
            10'000 & 640'000   & {98.67} & {84.68} & 89.82          & {89.14}          & 87.07        
            \\
        \textbf{HyperFace [ours]} &
            50'000 & 3'200'000   & 98.27 & {\textbf{85.60}}  & {91.48}& {\textbf{92.24}} & {90.40}       
            \\
        \midrule
                    CASIA-WebFace \scalebox{0.75}{\citep{yi2014learning}} & 
            10'572 & 490'623 &
            99.42 & 90.02 & 93.43 & 94.97 & 94.32 \\ 
        \bottomrule
    \end{tabular}
}
\end{table*}

\section{Identity Leakage and Recognition Performance}\label{app:identity_leakage_performance}
In Section~\ref{subsec:exp:discussion}, we discussed identity leakage in the generated face datasets. While the leakage of identity is not evident in the generated dataset, it is important to see if identity leakage may affect the recognition performance of face recognition models.
To this end, we consider the FFHQ and CASIA-WebFace datasets as two real face datasets and compare all possible pairs from our synthetic dataset with images in the real datasets. Then, for each of these real datasets, we find the top-200 pairs (synthetic-real) and exclude the corresponding synthetic image from our generated dataset. This ensures that images which may contain identity leakage are excluded from the final synthetic datasets. We use the resulting cleaned datasets to train face recognition models and compare them with the face recognition model trained on our original synthetic dataset. 
Table~\ref{tab:app:performance_identity_leakage} reports the recognition performance of face recognition models trained with original and cleaned synthetic datasets.
\begin{table}[h]
\centering
\caption{Evaluation of potential identity leakage on the recognition performance.}
\label{tab:app:performance_identity_leakage}
\begin{tabular}{lccccc}
\toprule
Synthetic Dataset  & LFW            & CPLFW          & CALFW          & CFP            & AgeDB          \\ \midrule
 cleaned \scalebox{0.9}{\color{gray}(Ref.: CASIA-WebFace)}   &   98.52 & 84.80 & 89.52 & 89.43 & 87.00 \\ \hline
 cleaned \scalebox{0.9}{\color{gray}(Ref.: FFHQ)}   &   98.77 & 84.53 & 89.35 & 89.36 & 86.67  \\  \hline
 original   &  98.67 & 84.68 & 89.82 & 89.14 & 87.07 \\ 
\bottomrule
\end{tabular}
\end{table}

As the results in Table~\ref{tab:app:performance_identity_leakage}, removing images with similar identities does not impact the recognition performance of the trained face recognition model. However, we would like to highlight that while identity leakage may not affect recognition performance on benchmark datasets, it is an important privacy concern.

\section{Additional Ablation Study}
In Section~\ref{sec:experiments}, we reported ablation studies on different hyperparameters in our dataset generation. 
As a new experiment, we consider different optimizers for solving HyperFace optimization (full batch). We consider RMSprop, Adam, and AdamW optimizers. Table~\ref{tab:app:ablation_optmizer} compares the performance of the face recognition model trained with datasets that are generated based on different optimizers in HyperFace optimization. As the results in this table show, solving HyperFace optimization with different optimizers leads to comparable performance.

\begin{table}[h]
\centering
\caption{Ablation study on optimizer}
\label{tab:app:ablation_optmizer}
\begin{tabular}{lccccc}
\toprule
Optimizer  & LFW            & CPLFW          & CALFW          & CFP            & AgeDB          \\ \midrule
RMSprop & 98.47 & 84.73 & 89.18 & 89.27 & 86.83 \\ \hline
AdamW   & 98.70  & 84.20  & 89.02 & 89.41 & 86.38 \\ \hline
Adam    & 98.67 & 84.68 & 89.82 & 89.14 & 87.07 \\ 
\bottomrule
\end{tabular}
\end{table}

As another experiment, we generate random points on the hypersphere and use random points as reference embeddings without HyperFace optimization to generate a synthetic dataset. We ensure that selected points have at least 0.3 cosine distance. 
Table~\ref{tab:app:ablation_random} compares the performance of the face recognition model trained with the dataset based on random embeddings and  HyperFace optimization. As the results in this table show, solving HyperFace optimization achieves superior performance on all benchmarks. Note that with a random selection of points on the hypersphere, there is no guarantee to be on the manifold of embeddings of the face recognition model. However, with our HyperFace optimization, we try to keep points on the face recognition manifold, which results in a dataset that leads to better performance.

\begin{table}[h]
\centering
\caption{Ablation study on using random points vs HyperFace optimization.}
\label{tab:app:ablation_random}
\begin{tabular}{lcccccc}
\toprule
Reference Embeddings  & LFW            & CPLFW          & CALFW          & CFP            & AgeDB        \\ \midrule
Random &  98.12 & 83.67 & 86.67 & 88.79 & 84.25  \\ \hline
HyperFace  Optimization  & \textbf{98.67} & \textbf{84.68} & \textbf{89.82} & \textbf{89.14} & \textbf{87.07}\\ 
\bottomrule
\end{tabular}
\end{table}

\section{HyperFace Dataset Generation}\label{app:data_generation}
We described HyperFace dataset generation in ~\ref{sec:method}.
Algorithm~\ref{alg_generation} summarizes the dataset generation process in our method. 

\begin{algorithm}[bh]
	\small
	\caption{HyperFace Dataset Generation}
	\label{alg_generation}
	\begin{algorithmic}[1]
		\State \textbf{Inputs}: 
		\quad $n_\text{id}:$ number of synthetic identities, $n_\text{sample}:$ number of sample images per identity, \\ \quad\quad\quad\quad $G:$ face generator model, $\beta:$ hyperparameter (controls variations in embeddings)
		\State \textbf{Output}: 
		\quad $\mathcal{D}_\text{HyperFace}=\{\bm{I}\}:$ generated dataset.
		\State \textbf{Procedure:}
            \State \quad Solve HyperFace optimization to find reference embeddings $\bm{X}_\text{ref}=\{\bm{x}_{\text{ref},i}\}_{i=1}^{n_\text{id}}$ \Comment{Algorithm~\ref{alg} or \ref{alg_stochastic}}
            \State \quad Initialize $\mathcal{D}_\text{HyperFace}$= [ ]
            \State \quad  \textbf{For} $\bm{x}_\text{ref} \in \{\bm{x}_{\text{ref},i}\}_{i=1}^{n_\text{id}}$  \textbf{do}

            \State \quad \quad  \textbf{For} $n=1, .., n_\text{sample}$  \textbf{do}

            \State \quad \quad \quad Sample Gaussian noise $\bm{z}\sim \mathcal{N}(0,\mathbb{I}^\text{DM})$  \Comment{For diffusion model $G$}
            \State \quad \quad \quad Sample Gaussian noise $\bm{v}\sim\mathcal{N}(0,\mathbb{I}^{n_\mathcal{X}})$  \Comment{For variations in the embedding $\bm{x}_\text{ref}$}
            \State \quad \quad \quad Generate synthetic image $\bm{I}=G(\frac{\bm{x}_\text{ref} + \beta \bm{v}}{||\bm{x}_\text{ref} + \beta \bm{v}||_2},\bm{z})$
            \State \quad \quad \quad $\mathcal{D}_\text{HyperFace}$.append($\bm{I}$)\Comment{Store the generated image $\bm{I}$ in the dataset $\mathcal{D}_\text{HyperFace}$}
		\State \quad \quad \textbf{End For}
		\State \quad  \textbf{End For}        
		\State \textbf{End Procedure}
	\end{algorithmic}
\end{algorithm}

\newpage
\section{Visualization}
Figure ~\ref{fig:appendix:sample_images} illustrates sample face images from the HyperFace dataset. In addition, Figure ~\ref{fig:appendix:sample_images_subject_3} and Figure ~\ref{fig:appendix:sample_images_subject_7} also show intra-class variations for two synthetic identities in the HyperFace dataset.
\begin{figure}[h]
\vspace{-7pt}
    \centering
    \includegraphics[width=0.75\linewidth]{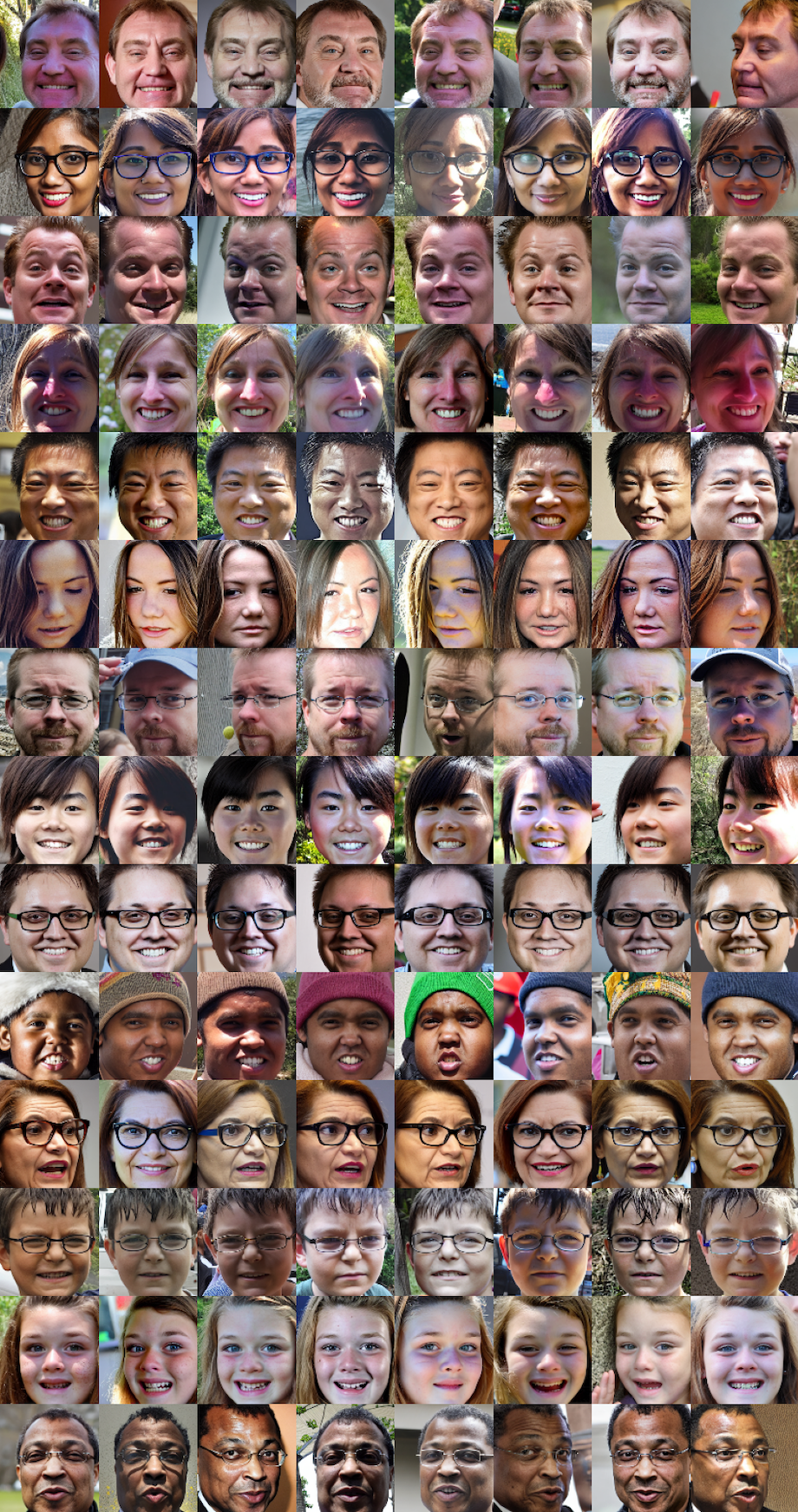}
    \vspace{-3pt}
    \caption{Sample face images of different synthetic identities from the HyperFace dataset.}
    \label{fig:appendix:sample_images}
\end{figure}

\begin{figure}[h]
\vspace{-7pt}
    \centering
    \includegraphics[width=0.785\linewidth]{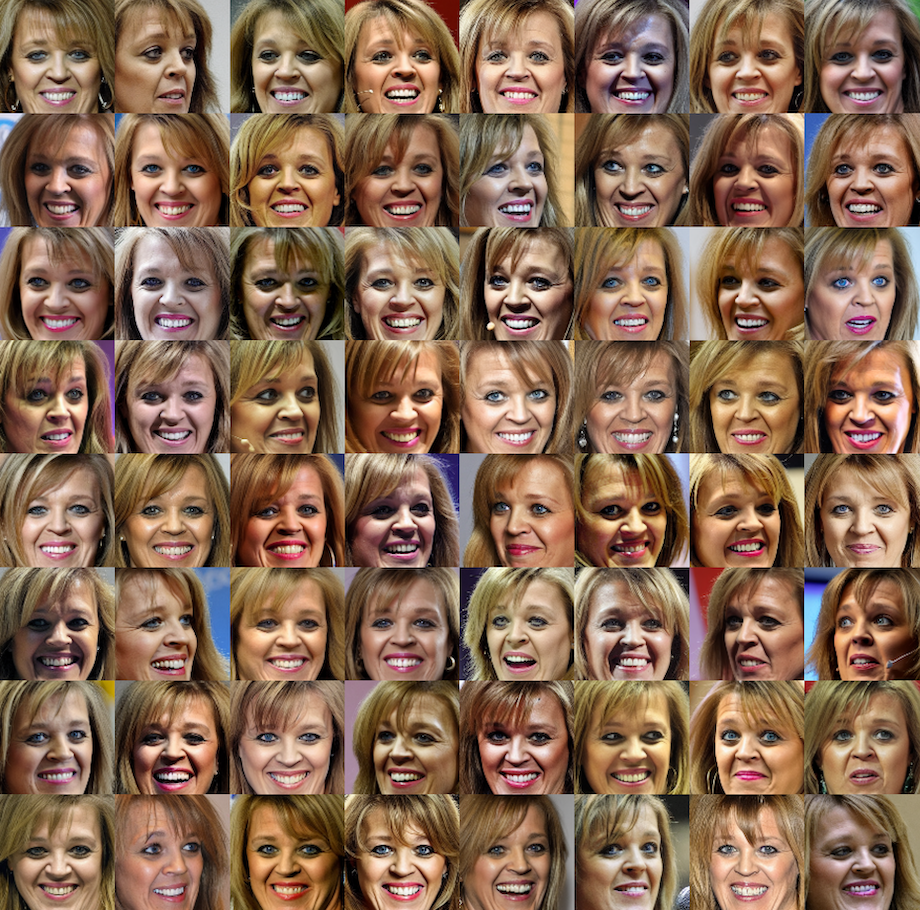}
    \vspace{-3pt}
    \caption{Sample face images of one subject from the HyperFace dataset (intra-class variations). }
    \label{fig:appendix:sample_images_subject_3}
\end{figure}

\begin{figure}[h]
\vspace{-7pt}
    \centering
    \includegraphics[width=0.785\linewidth]{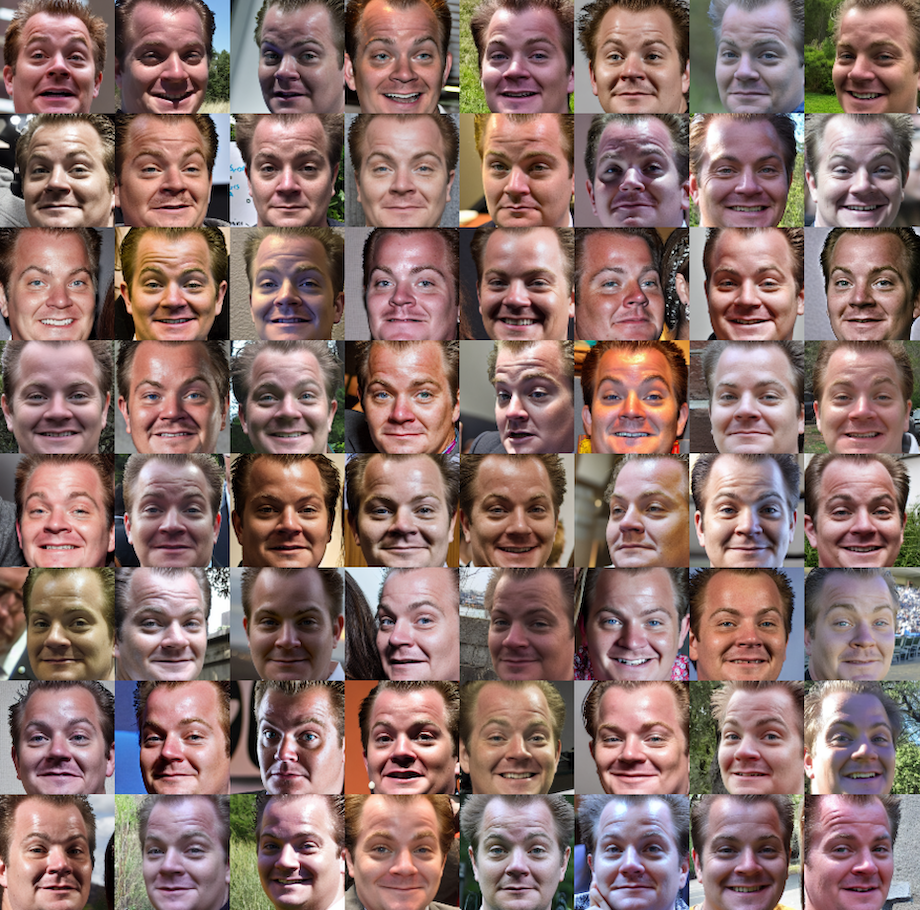}
    \vspace{-3pt}
    \caption{Sample face images of one subject from the HyperFace dataset (intra-class variations). }
    \label{fig:appendix:sample_images_subject_7}
\end{figure}

\end{document}